    \theoremstyle{definition}
    \newtheorem{definition}{Definition}
      \theoremstyle{plain}
\newcommand\thefontsize[1]{{#1 The current font size is: \f@size pt\par}}
\newcommand{\econst}{\mathrm{e}}
\newcommand{\eps}{\varepsilon}
\renewcommand{\phi}{\varphi}
\newcommand{\half}{\frac{1}{2}}
\newcommand{\R}{\mathbb{R}}
\newcommand{\sign}{\operatorname{sign}}
\newcommand{\trace}{\operatorname{tr}}
\newcommand{\abs}[1]{\vert {#1} \vert}
\newcommand{\norm}[1]{\Vert {#1} \Vert}
\newcommand{\diff}{\mathrm{d}}
\newcommand{\idiff}{\,\diff}
\newcommand{\Expect}{\operatorname{\mathbb{E}}}
\newcommand{\Probe}{\mathbb{P}}
\title{\vspace{-4em}Computing the Information Content\\of Trained Neural Networks}
\date{}
\author{\large Jeremy Bernstein\\ \normalsize \href{mailto:bernstein@caltech.edu}{bernstein@caltech.edu}
\and \large Yisong Yue\\ \normalsize \href{mailto:yyue@caltech.edu}{yyue@caltech.edu} }
\begin{document}
\maketitle
\begin{abstract}
    \noindent How much information does a learning algorithm extract from the training data and store in a neural network's weights? Too much, and the network would overfit to the training data. Too little, and the network would not fit to anything at all. Na\"{i}vely, the amount of information the network stores should scale in proportion to the number of trainable weights. This raises the question: \textit{how can neural networks with vastly more weights than training data still generalise?} A simple resolution to this conundrum is that the number of weights is usually a bad proxy for the actual amount of information stored. For instance, typical weight vectors may be highly compressible. Then another question occurs: \textit{is it possible to compute the actual amount of information stored?} %
    This paper derives both a consistent estimator and a closed-form upper bound on the information content of infinitely wide neural networks. The derivation is based on an identification between neural information content and the negative log probability of a Gaussian orthant. This identification yields bounds that analytically control the generalisation behaviour of the entire solution space of infinitely wide networks. The bounds have a simple dependence on both the network architecture and the training data. Corroborating the findings of \citet{valle-perez2018deep}, who conducted a similar analysis using approximate Gaussian integration techniques, the bounds are found to be both non-vacuous and correlated with the empirical generalisation behaviour at finite width.

\end{abstract}
\tableofcontents
\thispagestyle{empty}

\newpage
\section{Introduction}

As neural networks migrate from research labs to production systems, understanding their inner workings is becoming increasingly important. Yet there is still little consensus on some of the most basic questions. How much information does a network extract from the data during training? Why does a network with the capacity to memorise the training data typically not do so? What inductive bias underlies the network's ability to generalise? A firm understanding of these matters could help to explain why learning systems are so capable in some respects---and so unreliable in others.

When designing a statistical model to generalise beyond a set of training points, a traditional rule of thumb requires that:
\begin{equation}\label{eq:rot1}
    \#\,\mathrm{model\,parameters} < \#\, \mathrm{training\,data}.
\end{equation}
Empirically, and perhaps mysteriously, neural networks that violate this rule of thumb can still generalise. This matter was raised in \citet{radford}'s Ph.D.\ thesis, which ``challenge[d] the common
notion that one must limit the complexity of the model used when the amount of training
data is small'', and studied networks with \textit{infinitely more weights than data}. The general issue has also been broached from the learning theoretic angle. For instance, \citet{linial} ``establish[ed] the learnability of various concept classes with an \textit{infinite Vapnik-Chervonenkis dimension}''. In other words, the authors provided generalisation guarantees for models flexible enough to memorise any possible labelling of the training data.

\citeauthor{linial}'s work helped seed the PAC-Bayesian generalisation theory \citep{ShaweTaylor1996AFF,McAllester}. While this theory will be introduced rigorously in Section \ref{sec:pac}, for now some intuition will suffice. PAC-Bayes relates generalisability to the \textit{volume of parameter space excluded by the training data}. If fewer solutions are excluded, then less information needs to be extracted from the training data to find just \textit{one} solution, making memorisation of the training data less likely. This idea turns out to be equivalent to the statement that generalisation is guaranteed when \textit{the information content of a typical solution is sufficiently small}. Under PAC-Bayesian theory, the rule of thumb (\ref{eq:rot1}) may be refined to:
\begin{equation}\label{eq:rot2}
    \#\,\mathrm{bits\,needed\,to\,encode\,a\,solution} < \# \,\mathrm{training\,data}.
\end{equation}

A major question arises: how does one compute the information content of a trained neural network? This paper derives an analytical formula that takes as input a set of training data and an (infinitely wide) neural architecture, and returns the typical information content of the entire solution space. The formula is used to bound the test error averaged over all solutions. The appeal of this approach lies in the fact that it does not depend on the details of any specific learning algorithm or training procedure. By characterising the typical generalisation behaviour of the entire solution space, any learning algorithm that selects somewhat typical solutions will inherit the favourable generalisation properties of the space.

The paper is structured as follows: 
Section \ref{sec:related} discusses related work and distinguishes this paper's contributions.
Section \ref{sec:info} defines the relevant notion of information content. 
Section \ref{sec:pac} provides an introduction to PAC-Bayesian generalisation theory.
Section \ref{sec:symm} discusses a classic approach to bounding information content based on counting network symmetries.
Section \ref{sec:inf} introduces the neural network--Gaussian process correspondence, and characterises the information content of infinitely wide networks. Finally, Section \ref{sec:expt} experimentally tests the resulting generalisation bound.

\newpage
\section{Related Work}
\label{sec:related}

\textit{When we read a book or listen to a piece of music, how much of what we consume becomes a part of us?}

Quantifying the amount of information absorbed by a neural system through learning is an important question in artificial intelligence, statistics and the life sciences.

\paragraph{Information in a living system.} A folkloric estimate of the information content of the human brain is $\mathcal{O}(10^{14})$ bits based on counting the number of synapses. This estimate may be refined by combining it with an estimate of the information content of a single synapse \citep{nanoconnectomic}. The idea of estimating information content via counting synapses is directly analogous to parameter counting in statistics---see rule of thumb (\ref{eq:rot1}).

Still, cognitive scientists have criticised synapse counting as a means of estimating information content, since it is not based on a mechanistic understanding of how information is actually represented and stored in the brain. For instance, \citet{landauer} estimated the ``functional information content'' of human memory to be $\mathcal{O}(10^{9})$ bits---much lower than suggested by synapse counting. More recent work in the same vein was conducted by \citet{Mollica2019HumansSA} in the context of language acquisition.

To illustrate the fundamental importance of estimating neural information content, consider an application. Quantifying the amount of information gleaned through learning is central to understanding the relative importance of learned and innate behaviours in the animal kingdom---the \textit{nature versus nurture} debate. For instance, \citet{genomic} explored the ``genomic bottleneck''---the idea that the neural information that encodes innate behaviours must be compressed and stored in the genome.

\paragraph{Information in an artificial intelligence.} The connection between the information content of a statistical model and its predictive performance is front and centre in the \textit{minimum description length} theory of learning \citep{rissanen1986}. \citet{Hinton1993KeepingTN} explored this idea in the context of neural networks. However, \citet{rissanen1986} equates description length with parameter count, not addressing the \textit{actual} information content of a model. Likewise, the Akaike Information Criterion \citep{akaike} and the Bayesian Information Criterion \citep{bic} advocate for statistical model comparison based on parameter counting.

The PAC-Bayesian generalisation theory \citep{ShaweTaylor1996AFF, ShaweTaylor1997APA, McAllester}, on the other hand, relates predictive performance of a statistical model to its \textit{actual} information content. This is usually quantified in terms of the Kullback–Leibler divergence between a prior over models and the possible outcomes of the learning procedure. PAC-Bayes may be interpreted as a formalisation of Occam's Razor \citep{BLUMER1987377}---with a precise information theoretic notion of simplicity. Related ideas were explored by \citet{SCHMIDHUBER1997857}, but focusing on the Kolmogorov interpretation of complexity. \citet{DR17} investigated PAC-Bayes for neural networks. The authors derived non-vacuous generalisation bounds by computing a numerical estimate of the information content of solutions close to the weight initialisation.

The idea that neural networks are biased towards simple, low information content functions is often described as \textit{simplicity bias} in modern neural networks research. For instance, \citet{pmlr-v70-arpit17a} find that neural networks tend to first extract simple patterns from the training data, and argue that dataset dependent notions of neural network complexity are therefore needed. \citet{valle-perez2018deep}, \citet{NEURIPS2019_feab05aa} and \citet{Mingard2020IsSA} go further, suggesting that the entire space of neural networks is dominated by simple functions. Relatedly, \citet{Ulyanov2018DeepIP} explore the inductive bias of neural networks, finding that randomly initialised convolutional networks serve as a good prior for image-based inverse problems.

\paragraph{Infinite width limits} This paper builds on a body of work exploring neural networks as the number of hidden units in a layer is taken to infinity. In particular, the paper focuses on \citet{radford}'s formulation: under random sampling of the weights of an infinitely wide neural network, the distribution of network outputs for a fixed set of inputs follows a multivariate Normal distribution---with a covariance matrix determined by the network architecture and the inputs. Modern research refers to this formulation as the \textit{neural network--Gaussian process} (NNGP) correspondence \citep{lee2018deep}. 

Importantly, this paper does \textit{not} use the \textit{neural tangent kernel} (NTK) formulation due to \citet{NEURIPS2018_5a4be1fa}, which attempts to build an infinite width characterisation of training dynamics under gradient descent. \citet{NEURIPS2019_0d1a9651} interpret the NTK theory as saying that below a certain learning rate, infinitely wide neural networks never escape their linearisation about their initial parameter values. But whether this NTK limit adequately captures the full power of deep neural networks is unclear \citep{NEURIPS2019_ae614c55}.

By focusing on randomly sampled networks under the NNGP formulation and thereby accessing properties of the entire neural network function space, this paper avoids the need for characterisations of training dynamics such as NTK. In particular, by characterising the typical properties of the entire space of solutions, the results in this paper should extend to any learning method that selects somewhat typical solutions from the solution space.

\paragraph{This paper's contributions} The primary contribution of this paper is to derive a consistent estimator and a closed form upper bound on the information content of infinitely wide networks (Theorem \ref{thm:info}) through a characterisation of Gaussian orthant probabilities (Lemma \ref{thm:orthant}). This extends an analysis made by \citet{valle-perez2018deep}, where the authors used approximate Gaussian integration techniques and did not derive closed form expressions for the information content.

The second contribution is a generalisation bound for infinitely wide networks (Theorem \ref{thm:gen}) with an analytical dependence on network architecture and training data. At its heart, this is a PAC-Bayesian generalisation bound for Gaussian process classification combined with the NNGP characterisation of infinite width networks. By making the identification between the exact Gaussian process posterior and a Gaussian orthant, this paper avoids the need for Gaussian process approximations considered by \citet{seeger}. Furthermore, the generalisation bound is complementary to the PAC-Bayesian analysis of \citet{DR17}. Whereas this paper derives analytical results on the \textit{global} measure of solutions across the entire function space of infinite width networks, \citet{DR17} numerically compute the \textit{local} measure of solutions close to the weight initialisation of finite width networks. Both approaches lead to non-vacuous generalisation bounds, although this paper's approach is perhaps more analytically interpretable.

The final contribution is an experimental test of the derived generalisation bound for wide binary classifiers. Corroborating the findings of \citet{valle-perez2018deep}, this paper's Theorem \ref{thm:gen} is found to be both non-vacuous and correlated with the empirical generalisation behaviour at finite width.
\newpage
\begin{figure}
\begin{center}
    \begin{tikzpicture}[fill=gray!30]
    \fill  plot[smooth, tension=.9, scale=.4, shift={(-2,-2)}] coordinates {(-3.5,0.5) (-3,2.5) (-1,3.5) (1.5,3) (4,3.5) (5,2.5) (5,0.5) (2.5,-2) (0,-0.5) (-3,-2) (-3.5,0.5)};
    \draw  plot[smooth, tension=.9, scale=.4, shift={(-2,-2)}, thick] coordinates {(-3.5,0.5) (-3,2.5) (-1,3.5) (1.5,3) (4,3.5) (5,2.5) (5,0.5) (2.5,-2) (0,-0.5) (-3,-2) (-3.5,0.5)}
    (-3,-2) rectangle (3,2);
    \node at (1.6,1.7) {\textbf{weight space $\Omega$}};
    \node at (1.65,1.2) {volume $\Probe[\Omega]=1$};
    \node at (-.4,-.05) {\textbf{version space $V_S$}};
    \node at (-.4,-.55) {volume $\Probe[V_S]$};
    \end{tikzpicture}
    \caption{A classifier's version space $V_S$ is the subset of weight space $\Omega$ that fits a training set $S$ without error. To measure the relative volume of the version space, one can equip $\Omega$ with a probability measure $\Probe$.}%
    \label{fig:version-space}
\end{center}
\end{figure}
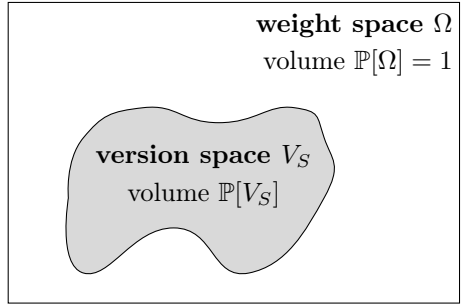

\section{Information: Shannon, Kolmogorov \& Bayes}
\label{sec:info}

This section establishes the rigorous notion of information that shall be considered throughout the paper. The information is defined with respect to the entire solution space of a classifier---as depicted in Figure \ref{fig:version-space}.
\begin{definition}
For a classifier with weight space $\Omega$, the \textit{verson space} $V_S\subset\Omega$ denotes the subset of weight settings that fit a training set $S$ without error.
\end{definition}
\begin{definition}\label{def:info}
Fix a probability measure $\Probe$ on $\Omega$. Then the \textit{typical information content} $\mathcal{I}$ of the version space $V_S\subset\Omega$ is given by
$\mathcal{I}[V_S] := \ln \frac{1}{\Probe[V_S]}$.
\end{definition}
To gain intuition, consider a finite weight space $\Omega$ of $N$ elements, let $\Probe$ be uniform over $\Omega$, and let $V_S$ contain $K$ solutions. Then the typical information content $\mathcal{I}[V_S]=\ln (N/K)=\ln N - \ln K$. In words, it is the ``number of parameters'' $\ln N$ needed to index an element of $\Omega$, less the ``entropy of solutions'' $\ln K$. Thus parameter counting overestimates the information content for models with a redundancy in the solution space.
The typical information content entertains further interpretations, as follows:

First, the \textbf{Shannon information} of an event with probability $p$ is $\log 1/p$. Therefore, the typical information content $\mathcal{I}[V_S]$ is the Shannon information content of the version space under measure $\Probe$. This quantifies the degree of surprise one experiences upon randomly sampling a classifier according to measure $\Probe$ and obtaining a member of the version space $V_S$.

Second, the \textbf{Kolmogorov complexity} of some data is the length of the shortest computer program that produces that data---so it is a theoretical measure of compressibility. Assuming the existence of a good pseudorandom number generator, $\mathcal{I}[V_S]$ gives an upper bound on the Kolmogorov complexity of typical members of the version space $V_S$. To see this, consider drawing classifiers from $\Probe$ until a member of the version space is discovered by chance. Such a solution may be described efficiently by the random seed and the total number of draws. But the expected number of draws required is $1/\Probe[V_S]$. Then Markov's inequality implies that with greater than $99.9\%$ probability, the number of bits $b$ needed to store the required number of draws satisfies $b \leq \log_2 (1/\Probe[V_S]) + 10$. In words, typical solutions may be described in a number of bits close to the typical information content.

And finally, the Bayesian interpretation. One may think of $\Probe$ as defining a Bayesian prior on the weight space $\Omega$. Choosing the likelihood function to be the indicator $\bm{1}_{V_S}$, then the \textbf{Bayesian evidence} for the class of networks given the training data is given by $\int_\Omega \bm{1}_{V_S} \idiff{\Probe}=\Probe[V_S]=\exp(-\mathcal{I}[V_S])$.

\newpage
\section{PAC-Bayesian Generalisation Theory}
\label{sec:pac}

The PAC-Bayesian method connects information content with generalisation.

Consider training a machine learning model to fit $n$ training points with binary labels. Suppose that (i) the trained model exactly fits the training labels, and (ii) the model extracts $b$ bits of information from the training set during learning. If more bits are extracted than there are binary training labels ($b\geq n$), then it is possible that the model memorised the training labels. But if the reverse is true ($b<n$), then the model cannot have memorised the training labels and must have captured some structure in the data. In the extreme that $b\ll n$, the model has found a simple rule that fits the training data and would be expected to generalise well.

This informal argument is made concrete by the PAC-Bayesian theorem. When the version space is non-empty, the training set is said to be \textit{realisable} by the classifier, and the theorem becomes very simple. When reading the theorem in the context of deep learning, it may help to think of the measure $\Probe$ as a weight initialiser, such as \textit{Xavier init} \citep{pmlr-v9-glorot10a}.

\begin{restatable}[Realisable PAC-Bayes]{theorem}{pacbayes}
\label{thm:pacbayes}
First, fix a probability measure $\Probe$ over the weight space $\Omega$ of a classifier. Let $S$ denote a training set of $n$ iid datapoints, $V_S\subset\Omega$ its version space, and $\mathcal{I}[V_S]$ the typical information content. Consider the population error rate $0\leq \eps(w)\leq 1$ of weight setting $w\in\Omega$, and its average over the version space $\eps(V_S) := \Expect_{w\sim\Probe}[\eps(w)|w\in V_S]$. Then, for a proportion $1-\delta$ of random draws of the training set $S$,
\begin{equation}
     \eps(V_S) \leq \ln \frac{1}{1-\eps(V_S)}\leq \frac{\mathcal{I}[V_S] + \ln \frac{2n}{\delta}}{n-1}.
\end{equation}
\end{restatable}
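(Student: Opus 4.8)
The plan is to handle the two inequalities separately. The left one, $\eps(V_S)\le\ln\frac{1}{1-\eps(V_S)}$, is elementary: it is the rearrangement of $1-x\le\econst^{-x}$ at $x=\eps(V_S)$, which is valid for every real $x$, and since $\eps(V_S)$ is an average of population error rates it lies in $[0,1]$; in the degenerate cases $\eps(V_S)=1$ or $V_S=\emptyset$ (where $\mathcal{I}[V_S]=+\infty$) the whole chain is read as $\le+\infty$ and there is nothing to prove. So the real content is the right inequality, which I would obtain by specialising PAC-Bayes to the \emph{version-space posterior}.

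Concretely, take as ``posterior'' the prior conditioned on the version space, $Q_S:=\Probe(\cdot\mid V_S)$. Two observations make this choice work: (i) $Q_S$ is supported on $V_S$, so its $Q_S$-averaged empirical (training) error is exactly $0$; and (ii) its relative entropy to the prior is exactly the typical information content, $\mathrm{KL}(Q_S\,\|\,\Probe)=\ln\frac{1}{\Probe[V_S]}=\mathcal{I}[V_S]$. One route is then to quote any PAC-Bayes-kl bound (Langford--Seeger / Maurer): with probability $\ge 1-\delta$ over $S$, for all $Q$ simultaneously, $\mathrm{kl}\big(\hat\eps_S(Q)\,\|\,\eps(Q)\big)\le\frac{1}{n}\big(\mathrm{KL}(Q\|\Probe)+\ln\frac{2\sqrt n}{\delta}\big)$; plugging in $Q=Q_S$ and using $\mathrm{kl}(0\,\|\,p)=-\ln(1-p)$ gives $-\ln(1-\eps(V_S))\le\frac1n\big(\mathcal{I}[V_S]+\ln\frac{2\sqrt n}{\delta}\big)$, and the stated form follows from the crude relaxations $2\sqrt n\le 2n$ and $n-1\le n$.

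Since the point of the paper is to be self-contained, the route I would actually write out avoids invoking a PAC-Bayes theorem and proves the bound directly. Apply Jensen twice: convexity of $x\mapsto-\ln(1-x)$ gives $(n-1)\big(-\ln(1-\eps(V_S))\big)\le\Expect_{w\sim Q_S}\big[-\ln(1-\eps(w))^{\,n-1}\big]$, and concavity of $\ln$ bounds this by $\ln\Expect_{w\sim Q_S}\big[(1-\eps(w))^{-(n-1)}\big]$. Unfolding the conditional expectation yields
\[
(n-1)\big(\!-\!\ln(1-\eps(V_S))\big)+\ln\Probe[V_S]\ \le\ \ln T_S,\qquad T_S:=\Expect_{w\sim\Probe}\big[\mathbf{1}_{V_S}(w)\,(1-\eps(w))^{-(n-1)}\big].
\]
Now control $T_S$: since the $n$ datapoints are iid, a fixed $w$ lands in $V_S$ with probability $(1-\eps(w))^{n}$ over the draw of $S$, so by Fubini $\Expect_S[T_S]=\Expect_{w\sim\Probe}\big[(1-\eps(w))^{n}(1-\eps(w))^{-(n-1)}\big]=\Expect_{w\sim\Probe}[1-\eps(w)]\le 1$. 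Markov then gives $T_S\le 1/\delta$ with probability at least $1-\delta$, hence $-\ln(1-\eps(V_S))\le\frac{1}{n-1}\big(\mathcal{I}[V_S]+\ln\frac1\delta\big)$, which in particular implies the claimed inequality.

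I expect the crux to be the change-of-measure step: converting a \emph{per-hypothesis} fact — the consistency probability $(1-\eps(w))^n$ — into a statement about the \emph{average} error $\eps(V_S)$ over the (possibly infinite) version space, without a union bound over $\Omega$. The Jensen-then-Markov manoeuvre above is precisely what sidesteps this; everything else is routine bookkeeping of the exponent $n-1$ versus $n$ and of the logarithmic constants, plus disposing of the degenerate cases $V_S=\emptyset$ and $\eps(V_S)=1$.
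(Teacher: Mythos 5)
Your proposal is correct, and it contains two routes. The first route is essentially the paper's own proof: the paper simply invokes \citet{Langford01boundsfor}'s Theorem 3 with prior $P=\Probe$ and posterior $Q=\Probe(\cdot\mid V_S)$, observing that the posterior's empirical error is zero and $\mathrm{KL}(Q\Vert P)=\mathcal{I}[V_S]$ --- exactly your observations (i) and (ii). The route you say you would actually write out, however, is genuinely different: rather than quoting a PAC-Bayes-kl theorem, you prove the bound from scratch by applying Jensen twice (convexity of $x\mapsto-\ln(1-x)$, then concavity of $\ln$), unfolding the conditional expectation to isolate $T_S=\Expect_{w\sim\Probe}\bigl[\mathbf{1}_{V_S}(w)(1-\eps(w))^{-(n-1)}\bigr]$, and then using Tonelli/Fubini with $\Expect_S[\mathbf{1}_{V_S}(w)]=(1-\eps(w))^{n}$ and Markov's inequality. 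I checked the steps and they are sound (with the routine care you already flag for $\Probe[V_S]=0$ and $\eps(w)=1$). What each approach buys: the paper's citation route is a one-liner but inherits the cited theorem's constants, which are tuned to the non-realizable kl form; your direct route is self-contained, exploits realizability to dispense with the union over empirical error levels, and actually yields the slightly tighter $-\ln(1-\eps(V_S))\leq\frac{\mathcal{I}[V_S]+\ln(1/\delta)}{n-1}$ --- indeed, since $\Expect_S\bigl[\mathbf{1}_{V_S}(w)(1-\eps(w))^{-n}\bigr]=1$ exactly, you could even keep the exponent $n$ and obtain $\frac{\mathcal{I}[V_S]+\ln(1/\delta)}{n}$, from which the stated bound follows a fortiori. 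Your closing remark about the crux is apt: the Jensen-then-Markov manoeuvre is precisely the change of measure that PAC-Bayes packages, and writing it out makes the theorem self-contained rather than an appeal to external constants.
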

In words, the average test error over the version space (the entire space of solutions) is low when the number of datapoints $n$ greatly exceeds the typical information content $\mathcal{I}[V_S]$ of the version space. This version of the PAC-Bayesian theorem was proved by \citet{valle-perez2018deep}. The proof is included in Appendix \ref{app:proofs} for completeness.

PAC refers to the fact that the bound only holds for most training sets, so the learning procedure is only \textit{probably approximately correct}. Since the failure probability $\delta$ appears inside a logarithm, the PAC aspect becomes unimportant when the number of data points $n$ is large.

The significance of Theorem \ref{thm:pacbayes} is perhaps under-appreciated in the machine learning community. Observe that if the average test error over the version space $\eps(V_S)$ is low, then \textit{most} members of the version space must have low test error (either intuitively, or by Markov's inequality). This view contrasts to papers that emphasise deriving PAC-Bayesian generalisation guarantees only for \textit{ensembles of networks} \citep{DR17}. To reiterate, the spirit of PAC-Bayes is as follows:
\begin{quote}
    \textit{Provided the typical information content $\mathcal{I}[V_S]$ of the version space is small compared to the number of datapoints $n$, then poorly generalising solutions are rare.}
\end{quote}
Or rephrased in Bayesian terminology: %
\begin{quote}
    \textit{Provided the Bayesian evidence $\Probe[V_S]$ for the model given the data is large compared to the reciprocal exponential of the number of datapoints $\econst^{-n}$, then a typical solution will generalise well.}
\end{quote}
Finally, PAC-Bayes formalises \textit{Occam's razor} \citep{BLUMER1987377}: 
\begin{quote}
    \textit{A model is ``simple'' if the Bayesian evidence $\Probe[V_S]\gg\econst^{-n}$, regardless of the number of parameters.}
\end{quote}
\newpage
\section{Information via Symmetry Counting}\label{sec:symm}

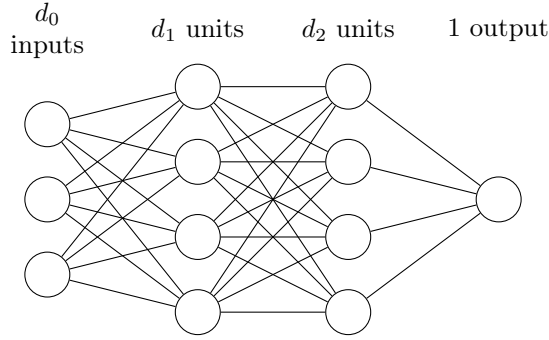
\begin{figure}
\begin{center}
\def\layersep{2cm}
\begin{tikzpicture}[-,draw=black, node distance=\layersep]
    \tikzstyle{neuron}=[circle,draw=black,minimum size=17pt,inner sep=0pt]
    \tikzstyle{annot} = [text width=4em, text centered]

    \foreach \name / \y in {1,...,3}
        \node[neuron] (I-\name) at (0,-\y) {};

    \foreach \name / \y in {1,...,4}
        \path[yshift=0.5cm]
            node[neuron] (H-\name) at (\layersep,-\y cm) {};
            
    \foreach \name / \y in {1,...,4}
        \path[yshift=0.5cm]
            node[neuron] (H1-\name) at (\layersep*2,-\y cm) {};

    \node[neuron] (O) at (\layersep*3,-2 cm) {};

    \foreach \source in {1,...,3}
        \foreach \dest in {1,...,4}
            \path (I-\source) edge (H-\dest);
            
    \foreach \source in {1,...,4}
        \foreach \dest in {1,...,4}
            \path (H-\source) edge (H1-\dest);

    \foreach \source in {1,...,4}
        \path (H1-\source) edge (O);

    \node[annot,above of=H-1, node distance=0.75cm] (hl) {$d_1$ units};
    \node[annot,left of=hl] {$d_0$ inputs};
    \node[annot,right of=hl] (h2) {$d_2$ units};
    \node[annot,right of=h2] {$1$ output};
\end{tikzpicture}
\end{center}

\caption{A multilayer perceptron (MLP) with $L=3$ layers.}
\label{fig:nn}
\end{figure}

This section discusses a classic approach to bounding the information content of an $L$-layer multilayer perceptron (MLP). Define the network recursively:
\begin{align}
    z^{(1)}_i(x) &= \frac{1}{\sqrt{d_0}}\sum_{j=1}^{d_0} W^{(1)}_{ij} x_j, \label{eq:first-layer}\\
    z^{(l)}_i(x) &= \frac{1}{\sqrt{d_{l-1}}}\sum_{j=1}^{d_{l-1}} W^{(l)}_{ij}\phi \left(z^{(l-1)}_j(x)\right), \label{eq:layer}
\end{align}
where $x\in\R^{d_0}$ denotes an input, $z^{(l)}(x)\in\R^{d_l}$ denotes the pre-activations at the $l$th layer, $W^{(l)}$ denotes the weight matrix at the $l$th layer, and $\phi$ denotes the nonlinearity. A three layer network is visualised in Figure \ref{fig:nn}.

Since a fully connected network is symmetric under permutation of its hidden units, then assuming its version space $V_S$ is non-empty, $V_S$ must contain at least $\prod_{l=1}^{L-1}d_l!$ functions by symmetry. \citet{denker} used this approach to bound the information content by comparing the number of symmetries to the total number of weight vectors:
\begin{equation}\label{eq:symm}
    \Probe[V_S] \geq \frac{\#\,\mathrm{symmetries}}{\#\,\mathrm{weight\,vectors}} = \frac{\prod_{l=1}^{L-1}d_l!}{2^{\sum_{l=1}^{L}d_l d_{l-1}w}}.
\end{equation}
The denominator is due to the network containing a number $\sum_{l=1}^{L}d_ld_{l-1}$ of $w$-bit weights. Unfortunately, this approach seems to become vacuous at large width. To see this, observe that Equation \ref{eq:symm} leads to the following:
\begin{align}
    \mathcal{I}[V_S] &\leq \ln(2) \sum_{l=1}^{L}d_l d_{l-1}w - \sum_{l=1}^{L-1} \ln(d_l!) \label{eq:info-symm}\\
    &\approx \ln(2)d_Ld_{L-1}w + \sum_{l=1}^{L-1}\left[ \ln(2)d_ld_{l-1}w - d_l\ln(d_l) + d_l\right].
\end{align}
where the second line follows from Sterling's approximation of the factorial. But at large width $d:=d_1=d_2=...=d_{L-1}$, this bound goes like $\mathcal{O}(wLd^2)$. This is proportional to the number of weights $Ld^2$, and in the limit that the width $d\rightarrow \infty$, the bound on information content becomes infinite.

So simple permutation symmetry does not appear to account for the generalisability of vastly over-parameterised networks. It should be said that this does not rule out the existence of other kinds of tricky-to-count symmetry. \citet[Appendix B]{DR17} provide further discussion.
\newpage
\section{Information via Infinite Width}\label{sec:inf}

The symmetry counting approach failed in the previous section because, as the network grows wider, the number of weight vectors grows far faster than the number of permutation symmetries. But one might have hoped for life to become \textit{easier} as the network grows wider. If each individual weight became less important as the width tends to infinity, then the function space might actually \textit{converge}. This idea is made rigorous by the neural network--Gaussian process correspondence \citep[NNGP]{radford}. The implication for this paper is that the information content of solutions will also converge.

\subsection{Neural Networks as Gaussian Processes}

Consider a training data set $S$ containing $n$ datapoints $x_1,...,x_n\in\R^{d_0}$ each with a binary class label $c_1,...,c_n\in\{\pm1\}$, and consider an infinitely wide network $f:\R^{d_0}\to\R$ sampled at random from the entire space of such networks $\Omega$ equipped with measure $\Probe$. Then under mild conditions (e.g. Theorem \ref{thm:relu}), the distribution of outputs over the training data is given by:
\begin{align}
    f(x_1), ... f(x_n) \sim \mathcal{N} (0,\Sigma),\label{eq:jointnormal}\\
    \Sigma_{ij} := \Expect_{f\sim\Probe} [f(x_i)f(x_j)].\label{eq:sigma}
\end{align}

This means that the entire function space of the network is characterised by the covariance structure $\Sigma(x,x^\prime) = \Expect_{f\sim\Probe} [f(x)f(x^\prime)]$. Therefore, in this infinite width limit, one expects the typical information content $\mathcal{I}[V_S]$ to be expressible purely as a function of $\Sigma$ and the training data set $S$.

The covariance structure $\Sigma(x,x^\prime)$ will generally depend on the the overall network topology. An example will help to clarify this. The following theorem, based on the results of \citet{choandsaul} and \citet{lee2018deep}, evaluates the covariance $\Sigma$ for depth-$L$ relu MLPs.

\begin{restatable}
[NNGP for relu networks]{theorem}{relu}\label{thm:relu} Consider an $L$-layer MLP defined recursively via Equations \ref{eq:first-layer} and \ref{eq:layer}, with input dimension $d_0$, hidden dimensions $d_1,...,d_{L-1}$ and output dimension $d_L=1$. Set the nonlinearity to a scaled relu: $$\phi(z) := \sqrt{2}\cdot\max(0,z).$$
Suppose that the weight matrices $W^{(1)}, ..., W^{(L)}$ have entries drawn iid $\mathcal{N}(0,1)$, and consider any collection of $k$ inputs $x_1, ..., x_k$ each with Euclidean norm $\sqrt{d_0}$. 

If $d_1,...,d_{L-1}\to \infty$, the distribution of outputs $z^{(L)}(x_1), ..., z^{(L)}(x_k) \in \R$ induced by random sampling of the weights is jointly Normal with:
\begin{align*}
\Expect \left[ z^{(L)}(x) \right] &= 0;\\
\Expect \left[ z^{(L)}(x)^2 \right] &= 1;\\ 
\Expect \left[ z^{(L)}(x) z^{(L)}(x^\prime)\right] &= \underbrace{h \circ ... \circ h}_{L-1\text{  times}}\left(\frac{x^Tx^\prime}{d_0}\right);
\end{align*}
where $h(t):= \tfrac{1}{\pi}\left[\sqrt{1-t^2} + t\cdot(\pi- \arccos t)\right]$.
\end{restatable}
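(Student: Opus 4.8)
The plan is to induct on the layer index $l$, exploiting at each step that the weight matrix $W^{(l)}$ is drawn independently of everything in layers $1,\dots,l-1$. Write $K^{(l)}(x,x') := \Expect[z^{(l)}_i(x)\,z^{(l)}_i(x')]$ for the covariance of a single hidden unit $i$ (well defined by exchangeability of the units). The induction hypothesis is: as $d_1,\dots,d_{l-1}\to\infty$ the pre-activation vectors $(z^{(l)}_i(x_1),\dots,z^{(l)}_i(x_k))$, $i=1,\dots,d_l$, converge in law to iid centred $k$-variate Gaussians with covariance $K^{(l)}$, and moreover $K^{(l)}(x,x)=1$ for all inputs $x$ of norm $\sqrt{d_0}$. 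The base case $l=1$ is immediate and exact: $z^{(1)}_i(x)=\tfrac{1}{\sqrt{d_0}}\sum_j W^{(1)}_{ij}x_j$ is a linear image of iid $\mathcal{N}(0,1)$ variables, hence Gaussian with mean $0$; distinct units use disjoint weights and are independent; and $K^{(1)}(x,x')=x^\top x'/d_0$, so $K^{(1)}(x,x)=\|x\|^2/d_0=1$.

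\textbf{The inductive step.} Condition on layer $l-1$. Since $W^{(l)}$ is independent of $z^{(l-1)}$, conditionally on $\{z^{(l-1)}_j\}_j$ the vector $(z^{(l)}_i(x_1),\dots,z^{(l)}_i(x_k))$ is a linear image of the iid Gaussian weights $\{W^{(l)}_{ij}\}_j$, hence conditionally Gaussian with mean $0$ and covariance $\widehat K(x_a,x_b)=\tfrac{1}{d_{l-1}}\sum_{j=1}^{d_{l-1}}\phi(z^{(l-1)}_j(x_a))\,\phi(z^{(l-1)}_j(x_b))$, and distinct output units are conditionally independent. By the induction hypothesis the summands are asymptotically iid across $j$, so the law of large numbers gives $\widehat K(x_a,x_b)\to \Expect_{u\sim\mathcal{N}(0,K^{(l-1)})}[\phi(u_a)\phi(u_b)]$ in probability; passing to the limit inside the conditional characteristic function then shows the unconditional law converges to the centred Gaussian with that covariance, with the output units becoming iid. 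This establishes the recursion $K^{(l)}(x,x')=\Expect_{u\sim\mathcal{N}(0,K^{(l-1)})}[\phi(u(x))\,\phi(u(x'))]$.

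\textbf{Evaluating the recursion.} With $\phi=\sqrt{2}\max(0,\cdot)$ we have $K^{(l)}(x,x')=2\,\Expect[\max(0,u)\max(0,v)]$ for $(u,v)$ bivariate Gaussian with covariance read off from $K^{(l-1)}$. On the diagonal, $u=v\sim\mathcal{N}(0,\sigma^2)$ gives $K^{(l)}(x,x)=2\Expect[\max(0,u)^2]=\sigma^2=K^{(l-1)}(x,x)$, so $K^{(l)}(x,x)=1$ for all $l$ by induction and the mean stays $0$; hence $u,v$ are unit-variance and their covariance equals the correlation $\rho:=K^{(l-1)}(x,x')\in[-1,1]$. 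It remains to verify the first-order arc-cosine identity $2\Expect[\max(0,u)\max(0,v)]=h(\rho)$, which I would obtain via Price's theorem: $\ddx{\rho}\Expect[\max(0,u)\max(0,v)]=\Prob{u>0,\,v>0}=\tfrac14+\tfrac{\arcsin\rho}{2\pi}$ by the bivariate-orthant formula, and integrating from $\rho=0$ with boundary value $\Expect[\max(0,u)]^2=1/(2\pi)$, then using $\arcsin\rho=\tfrac{\pi}{2}-\arccos\rho$, yields $\tfrac{1}{2\pi}[\sqrt{1-\rho^2}+\rho(\pi-\arccos\rho)]$; doubling gives exactly $h(\rho)$. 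Thus $K^{(l)}(x,x')=h(K^{(l-1)}(x,x'))$, and unrolling from $K^{(1)}(x,x')=x^\top x'/d_0$ gives $K^{(L)}(x,x')=\underbrace{h\circ\cdots\circ h}_{L-1\text{ times}}(x^\top x'/d_0)$.

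\textbf{Main obstacle.} The delicate part is the rigorous justification of the infinite-width limit: the induction hypothesis only gives layer $l-1$ as \emph{asymptotically} Gaussian, so the inductive step involves an iterated limit and one must show that ``conditionally Gaussian with a covariance matrix that concentrates to a deterministic limit'' transfers to unconditional Gaussianity, together with asymptotic independence of the hidden units. This is exactly the technical content of the NNGP correspondence, and I would either invoke that machinery (Neal; Lee et al.; Matthews et al.) or, for a self-contained treatment of a fixed finite input set, combine a characteristic-function/moment computation with a uniform-integrability bound that follows from $\phi$ having at most linear growth and the pre-activations having bounded variance. By comparison, the kernel recursion and the arc-cosine integral are routine.
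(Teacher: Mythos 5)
Your proposal is correct and follows the same overall strategy as the paper---a layer-wise induction establishing joint Normality in the wide limit together with the kernel recursion $\rho_l = h(\rho_{l-1})$, a unit-variance invariance on the diagonal thanks to the $\sqrt{2}$ scaling, and the base case $\rho_1(x,x') = x^\top x'/d_0$---but the two technical ingredients are executed by different means. For the Gaussianity step, the paper's Lemma~\ref{lem:nngp} applies the multivariate central limit theorem to the sum over incoming units (so Gaussian weights are not essential, only zero mean and finite variance), whereas you condition on layer $l-1$, use that the outputs are \emph{exactly} conditionally Gaussian because the weights are Gaussian, and then only need a law of large numbers for the empirical kernel plus convergence of the conditional characteristic function; your route buys a cleaner separation of randomness (weights vs.\ previous layer) at the cost of relying on Gaussian weights, while the paper's buys a weight-distribution-free statement. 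For the kernel integral, the paper substitutes $\rho = \cos\theta$ and quotes \citet{choandsaul}'s evaluation of $J_1(\theta)$, whereas you rederive $2\,\Expect[\max(0,u)\max(0,v)] = h(\rho)$ from scratch via Price's theorem and the bivariate orthant formula $\Prob{u>0, v>0} = \tfrac14 + \tfrac{\arcsin\rho}{2\pi}$; your computation checks out and is self-contained where the paper's is a citation. Finally, the iterated-limit subtlety you flag as the ``main obstacle'' is real, but it is present at the same level in the paper's own treatment: Lemma~\ref{lem:nngp} assumes the previous layer's activations are exactly iid, which holds only under the sequential limit $d_1\to\infty$, then $d_2\to\infty$, etc., so your proposed resolution (invoke the NNGP machinery, or a characteristic-function argument with uniform integrability using the linear growth of $\phi$) is at or above the paper's own standard of rigour and does not constitute a gap relative to it.
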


The proof is given in Appendix \ref{app:proofs}. This covariance structure is referred to as the \textit{compositional arccosine kernel} \citep{choandsaul}. Notice that according to Theorem \ref{thm:relu}, a single input $x$ with Euclidean norm $\sqrt{d_0}$ induces an output $z^{(L)}(x) \sim \mathcal{N}(0,1)$. This output standardisation results from the careful scalings adopted for the input, weight matrices and nonlinearity.

\subsection{Gaussian Orthant Probabilities}

Does the NNGP correspondence facilitate computation of the typical information content of infinitely wide neural networks?

Observe that the version space $V_S$ may be expressed as:
\begin{align}
    V_S&=\{f\in\Omega : \sign f(x_1)=c_1,...,\sign f(x_n)=c_n\}. \label{eq:vs}
\end{align}
Then in the infinite width limit, by Equation \ref{eq:jointnormal}, the measure of the version space is given by:
\begin{align}\label{eq:ivs}
    \Probe[V_S] &= \Probe_{z\sim\mathcal{N}(0,\Sigma)}[\sign z_1 = c_1,...,\sign z_n = c_n].
\end{align}
Equation \ref{eq:ivs} is just the probability of the Gaussian orthant picked out by the binary classes $c_1,...,c_n$. Finally, by Definition \ref{def:info}, the information content in the infinite width limit is given by:
\begin{align}\label{eq:info-iwl}
    \mathcal{I}[V_S] &= \ln\left(1/ \Probe_{z\sim\mathcal{N}(0,\Sigma)}[\sign z_1 = c_1,...,\sign z_n = c_n]\right).
\end{align}

This identification between information content and Gaussian orthant probabilities is only useful provided there is a means to compute, estimate or bound Gaussian orthant probabilities. A literature exists on the topic of high-dimensional Gaussian integration, and  \citet{Genz2009ComputationOM} provide a review. Typically, exact computations are tractable only for special cases, and otherwise approximations are used. 

To make progress, this paper has derived the following estimator and bound on Gaussian orthant probabilities:
\begin{restatable}[Gaussian orthant probability]{lemma}{orthant}\label{thm:orthant}
For a covariance matrix $\Sigma\in\R^{n\times n}$, and a binary vector $c\in\{\pm1\}^n$, let $p$ denote the corresponding Gaussian orthant probability:
\begin{equation}
    \label{eq:orthant}
    p := \Probe_{z\sim\mathcal{N}(0,\Sigma)}[\sign(z)=c].
\end{equation}
Letting $\mathbb{I}$ denote the $n\times n$ identity matrix, $\odot$ the elementwise product and $\abs{\cdot}$ the elementwise absolute value, then $p$ may be equivalently expressed as:
\begin{equation}
    \label{eq:estimator}
    p = \frac{1}{2^n} \Expect_{u\sim\mathcal{N}(0,\mathbb{I})}\left[\econst^{- \half (c\odot\abs{u})^T\left(\sqrt[n]{\det{\Sigma}}\Sigma^{-1} - \mathbb{I}\right)(c\odot\abs{u})}\right],
\end{equation}
and $p$ may be bounded as follows:
\begin{equation}
    \label{eq:bound}
    p \geq \frac{1}{2^n}\econst^{\frac{n}{2}-\sqrt[n]{\det \Sigma}\left[ (\frac{1}{2}-\frac{1}{\pi})\trace(\Sigma^{-1}) + \frac{1}{\pi} c^T\Sigma^{-1}c\right]}.
\end{equation}
\end{restatable}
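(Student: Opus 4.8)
The plan is to rewrite the orthant probability as an integral of the Gaussian density over the positive orthant, rescale the integration variable so that the determinant prefactor disappears, recognise what is left as a half-normal expectation (giving \eqref{eq:estimator}), and then read off the lower bound from Jensen's inequality applied to the convex map $x\mapsto\econst^{-x/2}$ (giving \eqref{eq:bound}).

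First I would substitute $z = c\odot v$ in $p = \int_{\{c_i z_i>0\}}(2\pi)^{-n/2}(\det\Sigma)^{-1/2}\econst^{-\half z^T\Sigma^{-1}z}\idiff z$. Since every $c_i=\pm1$ the Jacobian is $1$, the domain becomes the positive orthant $v\geq 0$, and the quadratic form becomes $(c\odot v)^T\Sigma^{-1}(c\odot v)$. The key observation is that $M := \sqrt[n]{\det\Sigma}\,\Sigma^{-1}$ has determinant exactly $1$, so the further rescaling $v = \sqrt[2n]{\det\Sigma}\cdot w$ converts $\half(c\odot v)^T\Sigma^{-1}(c\odot v)$ into $\half(c\odot w)^T M (c\odot w)$ while producing a Jacobian $(\det\Sigma)^{1/2}$ that exactly cancels the $(\det\Sigma)^{-1/2}$ prefactor. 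Writing $M = \mathbb{I} + (M-\mathbb{I})$ and using $c_i^2=1$ so that $(c\odot w)^T(c\odot w)=\normsq{w}$, the integrand factorises as $\econst^{-\half\normsq{w}}$ times $\econst^{-\half(c\odot w)^T(M-\mathbb{I})(c\odot w)}$.

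To finish \eqref{eq:estimator} I would use the elementary fact that, for any $g$ on the positive orthant, $\int_{w\geq 0}(2\pi)^{-n/2}\econst^{-\half\normsq{w}}g(w)\idiff w = 2^{-n}\Expect_{u\sim\mathcal{N}(0,\mathbb{I})}[g(\abs{u})]$, obtained by folding each of the $2^n$ sign orthants of $\R^n$ onto the positive one; taking $g(w) = \econst^{-\half(c\odot w)^T(M-\mathbb{I})(c\odot w)}$ gives \eqref{eq:estimator} exactly (so the fact that $M-\mathbb{I}$ need not be positive semidefinite is immaterial, and the right-hand side is manifestly positive). For \eqref{eq:bound}, convexity of $x\mapsto\econst^{-x/2}$ and Jensen's inequality give $p \geq 2^{-n}\econst^{-\half\Expect_u[(c\odot\abs{u})^T(M-\mathbb{I})(c\odot\abs{u})]}$. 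Expanding entrywise and using $\Expect[\abs{u_i}^2]=1$ and $\Expect[\abs{u_i}\abs{u_j}]=\Expect[\abs{u_i}]\Expect[\abs{u_j}]=2/\pi$ for $i\neq j$ (independence and $\Expect\abs{u_i}=\sqrt{2/\pi}$), together with $c_i^2=1$, $\trace M=\sqrt[n]{\det\Sigma}\trace(\Sigma^{-1})$ and $c^T M c=\sqrt[n]{\det\Sigma}\,c^T\Sigma^{-1}c$, collecting terms yields $\Expect_u[(c\odot\abs{u})^T(M-\mathbb{I})(c\odot\abs{u})] = \sqrt[n]{\det\Sigma}\big[(1-\tfrac{2}{\pi})\trace(\Sigma^{-1})+\tfrac{2}{\pi}c^T\Sigma^{-1}c\big]-n$, and substituting this into the Jensen estimate gives \eqref{eq:bound}.

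I expect the one genuinely clever step to be the determinant normalisation: recognising that scaling $\Sigma^{-1}$ by the scalar $\sqrt[n]{\det\Sigma}$ makes the resulting matrix unimodular is precisely what lets the rescaling absorb the $(\det\Sigma)^{-1/2}$ factor and leaves a clean standard-Gaussian expectation -- without it Jensen would not produce a closed form. The remaining steps (the two changes of variable, the orthant-folding identity, and the first- and second-moment computations for the half-normal) are routine bookkeeping.
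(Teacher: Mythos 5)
Your proposal is correct and follows essentially the same route as the paper's proof: the determinant-normalised change of variables $z=\sqrt[2n]{\det\Sigma}\,(c\odot u)$ (which you merely split into two substitutions), the symmetry/orthant-folding step that turns the positive-orthant integral into $2^{-n}$ times a standard-Gaussian expectation, and Jensen's inequality with the half-normal moments $\Expect\abs{u_i}^2=1$, $\Expect\abs{u_i}\abs{u_j}=2/\pi$. No gaps; your expansion of the expectation matches the paper's computation exactly.
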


To gain intuition about the lemma, notice that $1/2^n$ is the orthant probability for an isotropic Gaussian. Depending on the degree of \textit{anisotropy} $\sqrt[n]{\det{\Sigma}}\Sigma^{-1} - \mathbb{I}$ inherent in the covariance matrix $\Sigma$, Equation \ref{eq:estimator} captures how the orthant probability may either be exponentially amplified or exponentially suppressed compared to $1/2^n$.

In contrast to more general purpose Gaussian integration methods such as the \textit{GHK simulator} \citep[Chapter 4.2]{Genz2009ComputationOM}, Lemma \ref{thm:orthant} directly exploits the symmetric structure of Gaussian orthants. The proof of the lemma is given in Appendix \ref{app:proofs}, and constitutes one of this paper's main technical contributions.

\newpage
\subsection{Neural Information Content}

With the correspondence to Gaussian orthant probabilities in place, it is now a simple matter to derive the information content of infinitely wide neural networks and their corresponding generalisation theory. First, it will help to define the following two \textit{kernel complexity measures}:
\begin{align}
    \mathcal{C}_0(\Sigma, c) &:= \ln\frac{2^n}{ \Expect_{u\sim\mathcal{N}(0,\mathbb{I})}\left[\econst^{- \half (c\odot\abs{u})^T\left(\sqrt[n]{\det{\Sigma}}\Sigma^{-1} - \mathbb{I}\right)(c\odot\abs{u})}\right]}\geq 0,\label{eq:c0}\\
    \mathcal{C}_1(\Sigma, c) &:= \sqrt[n]{\det\Sigma}\left[\left(\half - \frac{1}{\pi}\right)\trace(\Sigma^{-1}) + \frac{1}{\pi}c^T\Sigma^{-1}c\right] \geq 0,\label{eq:c1}
\end{align}
where $\mathbb{I}$ denotes the $n\times n$ identity matrix, $\odot$ denotes the elementwise product, and $\abs{\cdot}$ denotes the elementwise absolute value. Then the following result is a basic consequence of Equation \ref{eq:info-iwl} and Lemma \ref{thm:orthant}.

\begin{restatable}[Neural information content]{theorem}{info}\label{thm:info}Consider a space of infinitely wide networks $\Omega$ equipped with a probability measure $\Probe$ that satisfies the requirements of the NNGP correspondence. See Theorem \ref{thm:relu} for an example. For a training set $S$ of $n$ data points $x_1,...,x_n$ with binary labels $c_1,...,c_n\in\{\pm1\}$, define covariance matrix $\Sigma\in\R^{n\times n}$ as in Equation \ref{eq:sigma}.

Then the typical information content of the version space is:
\begin{align}
    \mathcal{I}[V_S] = \mathcal{C}_0(\Sigma, c) \leq \frac{n}{5} + \mathcal{C}_1(\Sigma, c).
\end{align}
\end{restatable}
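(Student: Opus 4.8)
The plan is to assemble the statement directly from the infinite-width identification (Equation \ref{eq:info-iwl}) and Lemma \ref{thm:orthant}; no fresh machinery is required beyond a single numerical comparison of constants, since all of the genuine work is already discharged inside Lemma \ref{thm:orthant}.

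First I would prove the equality $\mathcal{I}[V_S] = \mathcal{C}_0(\Sigma,c)$. Under the NNGP correspondence the measure of the version space is exactly the Gaussian orthant probability $p$ of Equation \ref{eq:orthant}, so by Definition \ref{def:info} and Equation \ref{eq:info-iwl} one has $\mathcal{I}[V_S] = \ln(1/p)$. Substituting the exact expression for $p$ furnished by Equation \ref{eq:estimator} of Lemma \ref{thm:orthant} gives
\[
    \mathcal{I}[V_S] = \ln\frac{2^n}{\Expect_{u\sim\mathcal{N}(0,\mathbb{I})}\left[\econst^{- \half (c\odot\abs{u})^T\left(\sqrt[n]{\det{\Sigma}}\Sigma^{-1} - \mathbb{I}\right)(c\odot\abs{u})}\right]},
\]
which is literally the definition of $\mathcal{C}_0(\Sigma,c)$ in Equation \ref{eq:c0}. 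I would note in passing that $\Sigma$ is positive definite under the NNGP assumptions (for instance the compositional arccosine kernel of Theorem \ref{thm:relu} is positive definite whenever the inputs are distinct), so $\det\Sigma$ and $\Sigma^{-1}$ are well defined throughout.

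Second, for the upper bound I would feed the lower bound on $p$ from Equation \ref{eq:bound} of Lemma \ref{thm:orthant} into $\mathcal{I}[V_S] = \ln(1/p)$, which yields
\[
    \mathcal{I}[V_S] \leq n\ln 2 - \tfrac{n}{2} + \sqrt[n]{\det\Sigma}\left[\left(\half-\tfrac{1}{\pi}\right)\trace(\Sigma^{-1}) + \tfrac{1}{\pi}c^T\Sigma^{-1}c\right] = n\ln 2 - \tfrac{n}{2} + \mathcal{C}_1(\Sigma,c),
\]
identifying the bracketed term with $\mathcal{C}_1$ via Equation \ref{eq:c1}. It then remains only to absorb the architecture-independent constant: since $\ln 2 - \tfrac{1}{2} = 0.1931\ldots < 0.2 = \tfrac{1}{5}$, we get $n\ln 2 - \tfrac{n}{2} \leq \tfrac{n}{5}$, giving $\mathcal{I}[V_S] \leq \tfrac{n}{5} + \mathcal{C}_1(\Sigma,c)$ as claimed.

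The only ``obstacle'' here is bookkeeping rather than mathematics: verifying that the expectation and the trace/quadratic-form expression appearing in Lemma \ref{thm:orthant} coincide verbatim with the definitions \eqref{eq:c0} and \eqref{eq:c1}, and checking the elementary estimate $\ln 2 - \tfrac12 < \tfrac15$. One could additionally remark that the non-negativity annotations $\mathcal{C}_0\geq 0$ and $\mathcal{C}_1\geq 0$ follow for free — the former because $p\leq 1$ forces $\Expect_u[\cdots]\leq 2^n$, the latter because $\det\Sigma>0$, $\trace(\Sigma^{-1})>0$, $c^T\Sigma^{-1}c>0$ and $\tfrac12-\tfrac1\pi>0$ — but these are side observations, not part of the theorem.
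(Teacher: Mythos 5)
Your proposal is correct and follows exactly the route the paper intends: the theorem is stated there as ``a basic consequence of Equation \ref{eq:info-iwl} and Lemma \ref{thm:orthant}'', which is precisely your combination of the exact estimator (giving $\mathcal{I}[V_S]=\mathcal{C}_0$) and the orthant lower bound together with the elementary estimate $\ln 2 - \tfrac12 < \tfrac15$ (giving $\mathcal{I}[V_S]\leq \tfrac{n}{5}+\mathcal{C}_1$). Your side remarks on positive definiteness of $\Sigma$ and non-negativity of $\mathcal{C}_0,\mathcal{C}_1$ are sound but not required.
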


To gain some intuition about Theorem \ref{thm:info}, first consider the case that the neural architecture induces no correlation between any pair of distinct data points, such that $\Sigma = \mathbb{I}$. In this case, the information content $\mathcal{I}[V_S] = \mathcal{C}_0(\mathbb{I},c) = n\ln 2$, which scales with the number of data points $n$.

Next, suppose that the neural architecture induces strong intra-class correlations and strong inter-class anti-correlations, such that $\Sigma_{ij} = c_ic_j$. While Theorem \ref{thm:info} is not applicable since $\Sigma$ is singular, the version space probability is seen directly to be $\Probe[V_S] = 1/2$, since half of all draws from $\mathcal{N}(0,\Sigma)$ will achieve the correct class labelling. Therefore the information content $\mathcal{I}[V_S] = \ln 2$, which is independent of the number of data points.

The next result is a combination of Theorems \ref{thm:pacbayes} and \ref{thm:info}.

\begin{restatable}[Generalisation bound]{theorem}{gen}\label{thm:gen} In the same setting as Theorem \ref{thm:info}, let $0\leq \eps(f)\leq 1$ denote the test error of network $f\sim\Probe$, and let $\eps(V_S)$ denote the average test error over the version space: $\eps(V_S) := \Expect_{f\sim\Probe}[\eps(f)|f\in V_S]$. Then, for a proportion $1-\delta$ of randomly drawn iid training sets $S$,

\begin{align}
     \eps(V_S) \leq \ln \frac{1}{1-\eps(V_S)}&\leq \frac{ \mathcal{C}_0(\Sigma, c) + \ln \frac{2n}{\delta}}{n-1}\leq \frac{\frac{n}{5}+\mathcal{C}_1(\Sigma, c) + \ln \frac{2n}{\delta}}{n-1}.
\end{align}
\end{restatable}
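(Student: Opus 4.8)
The plan is to obtain Theorem~\ref{thm:gen} as an immediate corollary of Theorems~\ref{thm:pacbayes} and~\ref{thm:info}, with the only genuine work being to check that the hypotheses of the realisable PAC-Bayesian theorem are correctly instantiated in the infinite-width setting.

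First I would set up the classifier. Take $\Omega$ to be the space of infinitely wide networks $f:\R^{d_0}\to\R$ equipped with the NNGP measure $\Probe$, and read out binary predictions via $x\mapsto\sign f(x)$. With this readout the version space is exactly the set described in Equation~\ref{eq:vs}, and the test error $\eps(f)$ of a network coincides with the ``population error rate of a weight setting'' that appears in Theorem~\ref{thm:pacbayes}. Since the training set $S$ consists of $n$ iid datapoints by assumption, all hypotheses of Theorem~\ref{thm:pacbayes} hold, so for a proportion $1-\delta$ of draws of $S$,
\[
     \eps(V_S)\leq \ln\frac{1}{1-\eps(V_S)} \leq \frac{\mathcal{I}[V_S]+\ln\frac{2n}{\delta}}{n-1}.
\]
Next I would invoke Theorem~\ref{thm:info}, which identifies $\mathcal{I}[V_S]=\mathcal{C}_0(\Sigma,c)$ exactly (where $\Sigma$ is defined via Equation~\ref{eq:sigma}) and also gives $\mathcal{C}_0(\Sigma,c)\leq \tfrac{n}{5}+\mathcal{C}_1(\Sigma,c)$. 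Substituting the equality into the display above produces the middle inequality of the theorem. For the final inequality, note $n\geq 2$ so $n-1>0$; hence dividing $\mathcal{C}_0(\Sigma,c)\leq \tfrac{n}{5}+\mathcal{C}_1(\Sigma,c)$ through by $n-1$ preserves its direction, and adding $\ln\frac{2n}{\delta}\big/(n-1)$ to both sides chains it onto the previous bound. Stringing the three estimates together gives the claimed chain of inequalities.

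The only place demanding care---and thus the ``main obstacle'', such as it is---is the bookkeeping in the second paragraph: one must ensure that conditioning on $V_S$ is legitimate, i.e.\ that $\Probe[V_S]>0$, which is implicit in the statement and in particular holds whenever $\Sigma$ is nonsingular (so that $\mathcal{C}_0(\Sigma,c)$ and $\mathcal{C}_1(\Sigma,c)$ are well defined, per Theorem~\ref{thm:info}), and that the abstract ``weight space'' of Theorem~\ref{thm:pacbayes} is faithfully realised by the NNGP function space together with the measure $\Probe$. Beyond this, the result is a one-line substitution, so I would keep the write-up short and defer all analytic content to Theorems~\ref{thm:pacbayes} and~\ref{thm:info} (and, through the latter, to Lemma~\ref{thm:orthant}).
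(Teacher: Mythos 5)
Your proposal is correct and matches the paper's route exactly: the paper states Theorem~\ref{thm:gen} as a direct combination of Theorem~\ref{thm:pacbayes} (with $\Probe$ the NNGP measure and predictions read out via $\sign f$) and Theorem~\ref{thm:info}, which is precisely the substitution you carry out. Your additional remarks on $\Probe[V_S]>0$ and nonsingular $\Sigma$ are sensible bookkeeping but add nothing beyond the paper's argument.
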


Notice that for large $n$, the result simplifies to: $\eps(V_S)\lessapprox \frac{\mathcal{C}_0(\Sigma, c)}{n} \leq \frac{1}{5}+\frac{\mathcal{C}_1(\Sigma, c)}{n}$. In words, the typical test error depends on a tradeoff between the kernel complexity measures $\mathcal{C}_0,\mathcal{C}_1$ and the number of datapoints $n$.

Again, for the case of no correlations induced between inputs ($\Sigma = \mathbb{I}$), the bound on the test error is $\eps(V_S) \lessapprox \ln2$, which is worse than chance. This corresponds to \textit{pure memorisation} of the training labels. But for strong intra-class correlations and strong inter-class anti-correlations ($\Sigma_{ij} = c_ic_j$), the test error $\eps(V_S) \lessapprox \frac{\ln2}{n}$ (by Theorem \ref{thm:pacbayes}), which is much better than chance. This corresponds to \textit{pure generalisation} from the training labels.

Extrapolating from these observations would suggest that the role of a good neural architecture is to impose a prior on functions with strong intra-class correlations, and---if possible---strong inter-class anti-correlations.
\newpage
\section{Experiments}
\label{sec:expt}

Theorem \ref{thm:gen} was tested against the generalisation performance of finite width networks trained by gradient descent, and across datasets of varying complexity. The Pytorch code is available here: {\fontsize{11}{12}\selectfont \href{https://github.com/jxbz/entropix}{\texttt{github.com/jxbz/entropix}}}.

\subsection{Datasets}\label{sec:data}
Three modified versions of the MNIST handwritten digit dataset \citep{lecun2010mnist} were used in the experiments:

\begin{table}[H]
    \centering
    \resizebox{\linewidth}{!}{
    \begin{tabular}{ccccc}
        \toprule
        \textbf{Name of Dataset} & \textbf{Input Images} & \textbf{Binary Labels} & \textbf{Hardness}\\ 
        \midrule
        binary digits & $\{0, 1\}$ & even $\mapsto-1$, odd $\mapsto+1$ & easy\\
        decimal digits & $\{0, 1,...,9\}$ & even $\mapsto-1$, odd $\mapsto+1$ & harder\\
        random labels & $\{0, 1,...,9\}$ & fair coin flip for each image & impossible\\
        \bottomrule
    \end{tabular}
    }
\end{table}

The anticipated difficulty of learning each dataset is listed in the final column. For \textit{random labels} there is no meaningful relationship between image and label, so \textit{memorisation} is the only option and \textit{learning} is impossible.

\subsection{Network Architecture and Training}

MLPs were trained with $L$ layers and $W$ hidden units per layer. Specifically, each $28\mathrm{px}\times28\mathrm{px}$ input image $x_i$ was flattened to lie in $\R^{784}$, and normalised to satisfy $\|x_i\|_2=\sqrt{784}$. The networks consisted of an input layer in $\R^{784\times W}$, $(L-2)$ layers in $\R^{W\times W}$, and an output layer in $\R^{W\times1}.$ The nonlinearity $\phi$ was set to $\phi(z):=\sqrt{2}\cdot\max(0,z)$. This matches the network defined in Equations \ref{eq:first-layer} and \ref{eq:layer} (with the $1/\sqrt{d_l}$ pre-factors absorbed into the weights), thus the limiting kernel is the \textit{compositional arccosine kernel} of Theorem \ref{thm:relu}.

The networks were trained for 20 epochs using the Nero optimiser \citep{nero2021} with an initial learning rate of 0.01 decayed by a factor of 0.9 every epoch, and a mini-batch size of 50 data points. The training error (measured as an average over the final epoch) was 0\% in almost all experiments. It was never larger than 2\%---even on \textit{random labels}.

\subsection{Numerical Stability}
When computing the quantities in Theorem \ref{thm:gen}, two techniques were used to improve stability. First, the Cholesky decomposition was used to obtain the covariance inverse and determinant, according to the formulae:
\begin{equation*}
    \textstyle\mathtt{\Sigma = LL^T; \quad
    \Sigma^{-1} = (L^{-1})^TL^{-1};\quad
    \sqrt[\mathtt{n}]{\mathtt{det}\,\Sigma} = \prod_{i=1}^n L_{ii}^{2/n}}.
\end{equation*}
It was verified that for all computations in the paper, $\mathtt{\norm{\Sigma \Sigma^{-1} - I}_\infty < 10^{-3}}$. And second, when the complexity $\mathcal{C}_0(\Sigma, c)$ was estimated via $10,000$ Monte-Carlo samples, the log-sum-exp decomposition was used to avoid overflow:
\begin{equation*}\textstyle
    \mathtt{\mathtt{ln} \sum_i e^{a_i} = a_* + \mathtt{ln} \sum_i e^{a_i-a_*}}, \text{ where } \mathtt{a_* := \mathtt{max}_i\,a_i}.
\end{equation*}

For all results, the mean and range are reported over three random seeds. The random seed not only affects the Monte-Carlo estimation of $\mathcal{C}_0(\Sigma,c)$, but also other sources of variability such as the iid draw of the training examples, the network initialisation and the data ordering. As can be seen in Figure \ref{fig:expt}, the variability was quite low in all experiments. For one thing, this suggests good convergence of the Monte-Carlo estimates.

All PAC-Bayes bounds were computed with failure probability $\delta=0.01$.

\subsection{Results}

First, the $\mathcal{C}_0$ (Equation \ref{eq:c0}) estimator and $\mathcal{C}_1$ (Equation \ref{eq:c1}) bound of Theorem \ref{thm:gen} were compared against the empirical test performance of a depth $L=7$ MLP trained on the \textit{decimal digits} dataset. The results are shown in Figure \ref{fig:expt} (left). As can be seen, the theoretical predictions appear slack compared to the empirical performance, but the bounds are still non-vacuous and display the correct trend with increasing number of training examples.

Next, the $\mathcal{C}_0$ estimator of Theorem \ref{thm:gen} was compared across the \textit{binary digits}, \textit{decimal digits} and \textit{random labels} datasets. The network architecture was again set to a depth $L=7$ MLP. The results are shown in Figure \ref{fig:expt} (middle). Notice that the estimator follows the trend of ``hardness'' across datasets listed in the table in Section \ref{sec:data}. Also, the estimator was vacuous (that is, worse than chance) for the \textit{random labels} dataset. This is desirable, since the \textit{random labels} dataset is impossible to generalise from.

Finally, the effect of varying network depth was investigated on the \textit{decimal digits} dataset. Two depths were compared: $L=2$ and $L=7$. The results are shown in Figure \ref{fig:expt} (right). While the estimators are fairly slack compared to the finite width results, the general trends with respect to both depth and number of training examples do match the finite width results.

\begin{figure}
    \centering
    \includegraphics[width=0.33\textwidth]{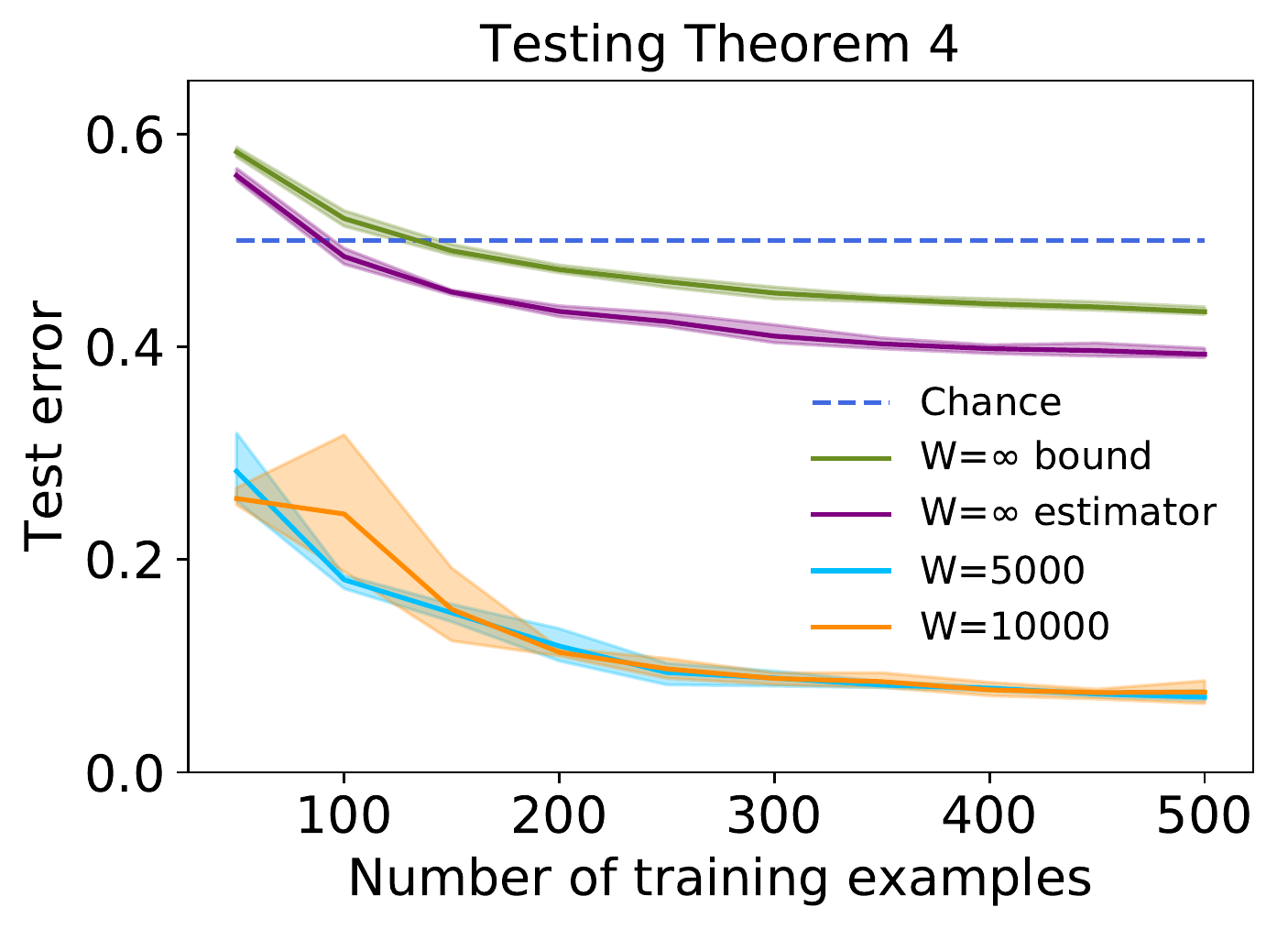}\includegraphics[width=0.33\textwidth]{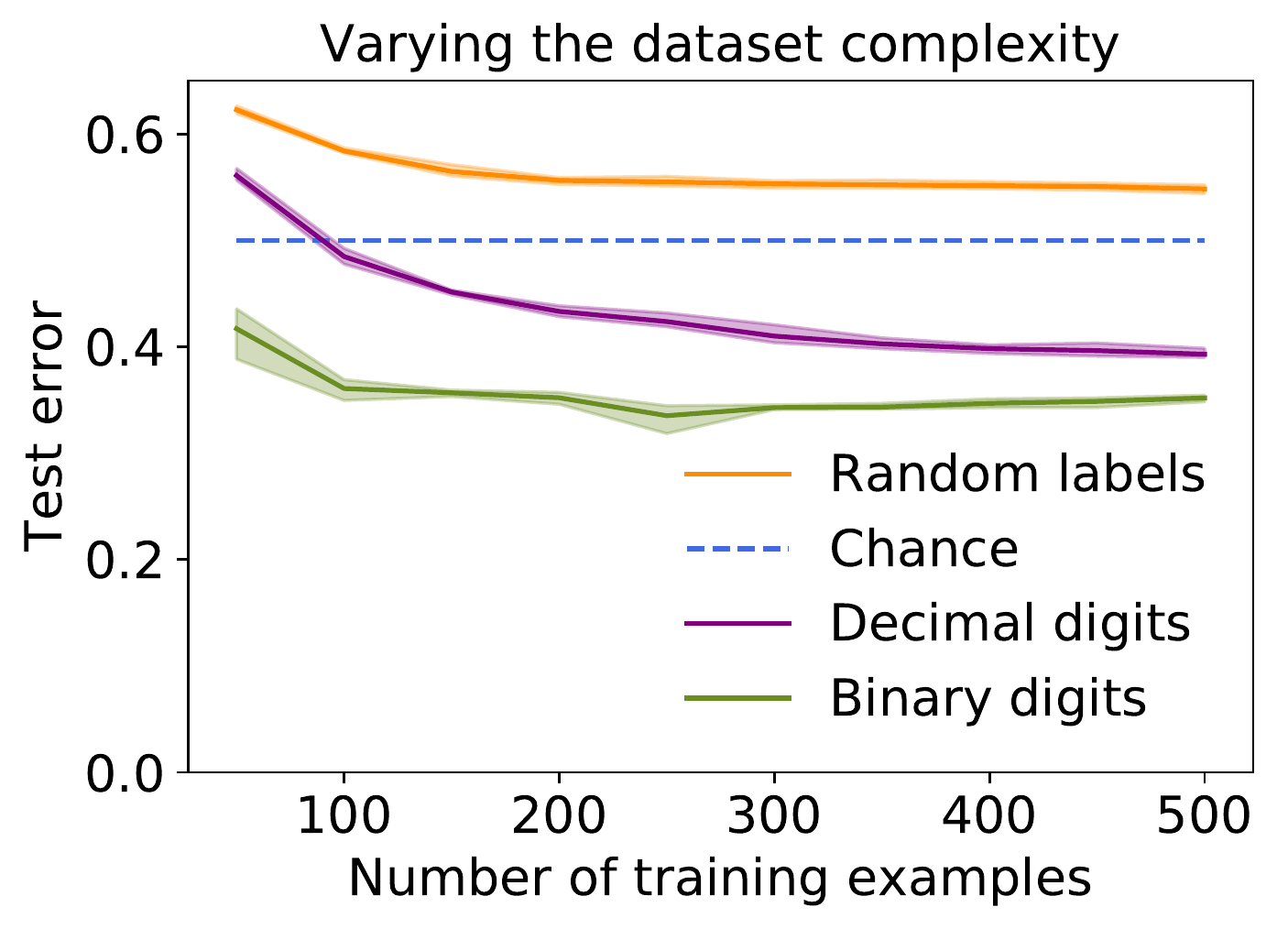}\includegraphics[width=0.33\textwidth]{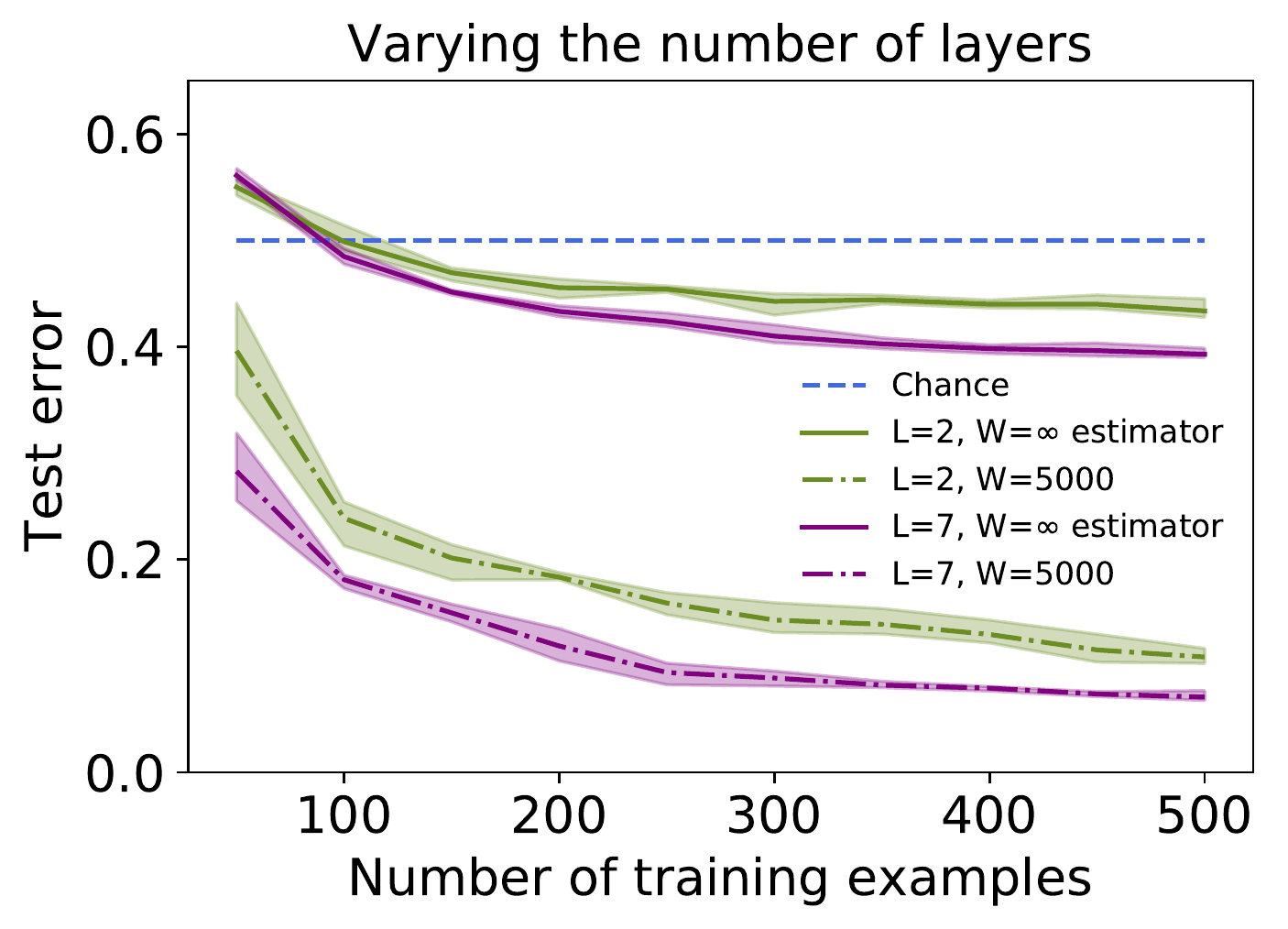}
    \caption{Left: Comparing the Theorem \ref{thm:gen} estimator and bound to the empirical test performance of depth $L=7$ MLPs on the \textit{decimal digits} dataset. Middle: Comparing the Theorem \ref{thm:gen} estimator across datasets of varying complexity, for depth $L=7$ MLPs. Right: Comparing the Theorem \ref{thm:gen} estimator across networks of varying depth on the \textit{decimal digits} dataset. Estimators were computed based on 10,000 Monte-Carlo samples. All curves plot the mean and shade the range over three global random seeds.}
    \label{fig:expt}
\end{figure}
\section{Conclusion}

This paper has explored means of computing the information content $\mathcal{I}[V_S]$ of trained neural networks. First, an approach of \citet{denker} was considered, based on counting symmetries. Unfortunately, the resulting bound on information content seemed to reduce to simple parameter counting at large width. Next, an approach of \citet{valle-perez2018deep} based on the neural network--Gaussian process correspondence was considered. The technique was refined based on a characterisation of Gaussian orthant probabilities (Lemma \ref{thm:orthant}). The resulting generalisation bound was tested empirically, and found to be both non-vacuous and correlated with empirical generalisation performance at finite width. Overall, this paper's results support the findings of \citet{valle-perez2018deep}: a careful PAC-Bayesian analysis of infinite width networks offers a compelling explanation for the generalisability of neural networks with vastly more parameters than data.

Finally, Theorem \ref{thm:gen} may be converted to a generalisation bound for \textit{Gaussian process classification}, meaning it may be of independent interest.

\bibliography{refs}
\bibliographystyle{plainnat}

\newpage
\appendix
\section{Proofs}\label{app:proofs}

The following PAC-Bayesian theorem is due to \citet{valle-perez2018deep}:
\pacbayes*
\begin{proof}
The first inequality is a basic property of logarithms. The second inequality follows from \citet{Langford01boundsfor}'s Theorem 3, by setting the prior measure to $P(\cdot)=\Probe(\cdot)$ and the posterior measure to the conditional $Q(\cdot)=\Probe(\cdot|V_S)$. Under these settings, the average training error rate over the posterior is zero and $\mathrm{KL}(Q|P)=\log \tfrac{1}{\Probe[V_S]}=:\mathcal{I}[V_S]$.
\end{proof}

The essence of the following lemma is due to \citet{radford}. The lemma will be used in the proof of Theorem \ref{thm:relu}.

\begin{restatable}[NNGP correspondence]{lemma}{nngp}
\label{lem:nngp} For the neural network layer given by Equation \ref{eq:layer}, consider randomly sampling the weight matrix $W^{(l)}$. If the following hold:
\begin{enumerate}[label=(\roman*)]
    \item for every $x \in \R^{d_0}$, the activations $\phi\left(z^{(l-1)}_1(x)\right),...,\phi\left(z^{(l-1)}_{d_{l-1}}(x)\right)$ are iid with finite first and second moment;
    \item the weights $W_{ij}^{(l)}$ are drawn iid with zero mean and finite variance;
    \item for any random variable $z$ with finite first and second moment, $\phi(z)$ also has finite first and second moment;
\end{enumerate}
then, in the limit that $d_{l-1}\rightarrow\infty$, the following also hold:
\begin{enumerate}[label=(\arabic*)]
    \item for every $x \in \R^{d_0}$, the activations $\phi\left(z^{(l)}_1(x)\right),...,\phi\left(z^{(l)}_{d_{l}}(x)\right)$ are iid with finite first and second moment;
    \item for any collection of $k$ inputs $x_1, ..., x_k$, the distribution of the $i$th pre-activations $z^{(l)}_i(x_1), ..., z^{(l)}_i(x_k)$ is jointly Normal.
\end{enumerate}
\end{restatable}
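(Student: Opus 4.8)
The plan is to derive both conclusions from the classical multivariate central limit theorem for sums of independent and identically distributed random vectors. The structural observation that makes this work is that, in Equation~\ref{eq:layer}, for a \emph{fixed} hidden unit $i$ the pre-activation $z^{(l)}_i(x) = d_{l-1}^{-1/2}\sum_{j=1}^{d_{l-1}} W^{(l)}_{ij}\,\phi(z^{(l-1)}_j(x))$ is a normalised sum \emph{over the hidden units $j$} of terms that are iid in $j$: hypothesis~(i) states that the previous-layer activations $\phi(z^{(l-1)}_1(\cdot)),\dots,\phi(z^{(l-1)}_{d_{l-1}}(\cdot))$ are iid across $j$, hypothesis~(ii) states that the fresh weights $W^{(l)}_{i1},\dots,W^{(l)}_{id_{l-1}}$ are iid across $j$, and these two families are mutually independent, so a termwise product inherits iid-ness in $j$. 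Write $\sigma_W^2 := \Var(W^{(l)}_{ij})$ and $K(x,x') := \Expect[\phi(z^{(l-1)}_j(x))\,\phi(z^{(l-1)}_j(x'))]$, which is finite for all $x,x'$ by hypothesis~(i) together with Cauchy--Schwarz.

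For conclusion~(1), fix an input $x$ and an arbitrary finite set of units $1,\dots,r$ (e.g.\ $r=d_l$), and write $(z^{(l)}_1(x),\dots,z^{(l)}_r(x))^{\top} = d_{l-1}^{-1/2}\sum_{j=1}^{d_{l-1}} U_j$ with $U_j := \phi(z^{(l-1)}_j(x))\cdot(W^{(l)}_{1j},\dots,W^{(l)}_{rj})^{\top}\in\R^r$. By the observation above the $U_j$ are iid in $j$; they have mean zero because the weights do, and a short computation using independence of distinct weight rows gives $\Cov(U_j) = \sigma_W^2\,K(x,x)\,\Id_r$, finite by hypotheses~(i) and~(ii). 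The multivariate CLT then yields $d_{l-1}^{-1/2}\sum_j U_j \xrightarrow{d} \mathcal{N}(0,\sigma_W^2 K(x,x)\,\Id_r)$ as $d_{l-1}\to\infty$, i.e.\ in the limit $z^{(l)}_1(x),\dots,z^{(l)}_r(x)$ are iid $\mathcal{N}(0,\sigma_W^2 K(x,x))$. Applying $\phi$ and the continuous mapping theorem (valid for any $\phi$ continuous outside a Lebesgue-null set) shows $\phi(z^{(l)}_1(x)),\dots,\phi(z^{(l)}_r(x))$ are iid in the limit, and hypothesis~(iii) applied to the Gaussian limit guarantees finite first and second moments. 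As $r$ and $x$ were arbitrary, this is conclusion~(1).

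For conclusion~(2), fix a unit $i$ and inputs $x_1,\dots,x_k$, and set $v_j := (\phi(z^{(l-1)}_j(x_1)),\dots,\phi(z^{(l-1)}_j(x_k)))^{\top}\in\R^k$; these are iid in $j$ with finite second moments by hypothesis~(i). Then $(z^{(l)}_i(x_1),\dots,z^{(l)}_i(x_k))^{\top} = d_{l-1}^{-1/2}\sum_{j=1}^{d_{l-1}} W^{(l)}_{ij}\,v_j$ is a normalised sum of the iid mean-zero vectors $W^{(l)}_{ij} v_j$, whose common covariance is $\sigma_W^2\,[K(x_m,x_{m'})]_{m,m'=1}^{k}$, finite since each entry is bounded via Cauchy--Schwarz. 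The multivariate CLT gives convergence in distribution to $\mathcal{N}\!\big(0,\sigma_W^2[K(x_m,x_{m'})]_{m,m'}\big)$, so the $i$th pre-activations across the $k$ inputs are jointly Normal (possibly degenerate) in the limit, which is conclusion~(2).

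The one step that needs care — and the only real obstacle — is the precise reading of hypothesis~(i). For conclusion~(2) one needs the \emph{activation functions} $\phi(z^{(l-1)}_j(\cdot))$, equivalently their finite-dimensional evaluation vectors $v_j$, to be iid across $j$, not merely the scalars $\phi(z^{(l-1)}_j(x))$ for each individual $x$; hypothesis~(i) is to be understood in this process-level sense, which is exactly what holds at the first layer (where $W^{(1)}$ has iid rows, so $z^{(1)}_1(\cdot),\dots,z^{(1)}_{d_1}(\cdot)$ are iid Gaussian processes) and what propagates when conclusion~(1) is upgraded analogously. A secondary point: hypothesis~(ii) grants only a finite second moment for the weights, so one should invoke the plain Lindeberg--L\'evy multivariate CLT, which needs nothing beyond finite second moments of the iid summands; the ``finite first and second moment'' clauses in (i)--(iii) are precisely what make the limiting covariance matrices above well-defined.
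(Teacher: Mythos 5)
Your proof follows essentially the same route as the paper's: both conclusions are obtained by writing the layer-$l$ pre-activations as a normalised sum over the previous layer's units of iid random vectors (a vector across the units of layer $l$ for conclusion (1), a vector across the $k$ inputs for conclusion (2)), invoking the multivariate central limit theorem to get a Gaussian limit whose covariance is a scaled identity in case (1), and then applying hypothesis (iii) to transfer finiteness of moments through $\phi$. Your additional remark that hypothesis (i) must be read in the process-level sense (iid-ness across $j$ of the evaluation vectors $v_j$, not merely of the scalars at each fixed input) is a fair point of precision that the paper leaves implicit, but it does not alter the argument.
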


While condition (i) may seem non-trivial, notice that the lemma propagates this condition to the next layer via entailment (1). This means that provided condition (i) holds for $\phi(z^{(1)}(x))$ at the first layer, then recursive application of the lemma implies that the network's pre-activations are jointly Normal \textit{at all layers} via entailment (2).

\begin{proof}[Proof of Lemma \ref{lem:nngp}] 
To establish entailment (1), consider the $d_l$-dimensional vector $Z_1 := \left[z^{(l)}_1(x), ..., z^{(l)}_{d_l}(x)\right]$. Observe that $Z_1$ satisfies:
\begin{gather}\label{eq:z1}
    Z_1 = \frac{1}{\sqrt{d_{l-1}}}\sum_{j=1}^{d_{l-1}} \left[W^{(l)}_{1j}\phi \left(z^{(l-1)}_j(x)\right), ..., W^{(l)}_{d_{l}j}\phi \left(z^{(l-1)}_j(x)\right) \right].
\end{gather}
By conditions (i) and (ii), the summands in Equation \ref{eq:z1} are iid random vectors with zero mean, and any two distinct components of the same vector summand have the same variance and zero covariance. Then by the multivariate central limit theorem \citep[p.~16]{vaart_1998}, in the limit that $d_{l-1}\to\infty$, the components of $Z_1$ are Gaussian with a covariance equal to a scaled identity matrix. In particular, the components of $Z_1$ are iid with finite first and second moment. Applying condition (iii) then implies that the same holds for $\phi(Z_1)$. This establishes entailment (1).

To establish entailment (2), consider instead the $k$-dimensional vector $Z_2 := \left[z^{(l)}_i(x_1), ..., z^{(l)}_i(x_k)\right]$. Observe that $Z_2$ satisfies:
\begin{gather}\label{eq:z2}
    Z_2 = \frac{1}{\sqrt{d_{l-1}}}\sum_{j=1}^{d_{l-1}} \left[W^{(l)}_{ij}\phi \left(z^{(l-1)}_j(x_1)\right), ..., W^{(l)}_{ij}\phi \left(z^{(l-1)}_j(x_k)\right) \right].
\end{gather}
Again by combining conditions (i) and (ii), the summands in Equation \ref{eq:z2} are iid random vectors with finite mean and finite covariance. Then as $d_{l-1}\rightarrow\infty$, the distribution of $Z_2$ is jointly Normal---again by the multivariate central limit theorem. This establishes entailment (2).
\end{proof}

The essence of the following theorem appears in a paper by \citet{lee2018deep}, building on the work of \citet{choandsaul}. The theorem and its proof are included for completeness.
\relu*
\begin{proof}
    Condition (ii) of Lemma \ref{lem:nngp} holds at all layers for iid standard Normal weights, and condition (iii) holds trivially for the scaled relu nonlinearity. Provided we can establish condition (i) for the first layer activations $\phi\left(z^{(1)}_1(x)\right),...,\phi\left(z^{(1)}_{d_{l-1}}(x)\right)$, then condition (i) will hold at all layers by recursive application of Lemma \ref{lem:nngp}, thus establishing joint Normality of the pre-activations at all layers (including the network outputs). But condition (i) holds at the first layer, since it is quick to check by Equation \ref{eq:first-layer} that for any $x$ satisfying $\norm{x}_2 = \sqrt{d_0}$, the pre-activations $z^{(1)}_1(x),...,z^{(1)}_{d_{l-1}}(x)$ are iid $\mathcal{N}\left(0,1\right)$, and $\phi$ preserves both iid-ness and finite-ness of the first and second moment.
    
    Since the pre-activations at any layer are jointly Normal, all that remains is to compute their first and second moments. For the $i$th hidden unit in the $l$th layer, the first moment $\Expect \left[ z^{(l)}_i(x)\right] = 0$. This can be seen by taking the expectation of Equation \ref{eq:layer} and using the fact that the $W_{ij}^{(l)}$ are independent of the previous layer's activations and have mean zero.
    
    Since the pre-activations $z_i^{(l)}(x)$ and $z_i^{(l)}(x^\prime)$ are jointly Normal with mean zero, their distribution is completely described by their covariance matrix $\Sigma_l(x,x^\prime)$, defined by:
    \begin{align*}
        \rho_{l}(x,x^\prime)&:=\Expect \left[ z^{(l)}_i(x) z^{(l)}_i(x^\prime)\right] \\
        \Sigma_{l}(x,x^\prime)&:=
          \begin{bmatrix}
            \rho_{l}(x,x) & \rho_{l}(x,x^\prime)  \\
            \rho_{l}(x,x^\prime) & \rho_{l}(x^\prime,x^\prime)
          \end{bmatrix},
    \end{align*}
    where the hidden unit index $i$ is unimportant since hidden units in the same layer have identical distributions.
    
    The theorem statement will follow from an effort to express $\Sigma_l(x,x^\prime)$ in terms of $\Sigma_{l-1}(x,x^\prime)$, and then recursing back through the network. By Equation \ref{eq:layer} and independence of the $W_{ij}^{(l)}$, the covariance $\rho_l(x,x^\prime)$ may be expressed as: 
    \begin{equation}\label{eq:covar}
        \rho_{l}(x,x^\prime) = \Expect \left[ \phi \left(z^{(l-1)}_j(x)\right) \phi \left(z^{(l-1)}_j(x^\prime)\right) \right],
    \end{equation}
    where $j$ indexes an arbitrary hidden unit in the $(l-1)$th layer. To make progress, it helps to first evaluate:
    \begin{gather*}
        \rho_l(x,x) = \Expect \left[ \phi \left(z^{(l-1)}_j(x)\right)^2 \right] = \half \cdot 2 \cdot \rho_{l-1}(x,x),
    \end{gather*}
    which follows by the definition of $\phi$ and symmetry of the Gaussian expectation around zero. Then, by recursion:
    \begin{gather*}
        \rho_{l}(x,x) = \rho_{l-1}(x,x) =... = \rho_{1}(x,x) = 1,
    \end{gather*}
    where the final equality holds because the first layer pre-activations are iid $\mathcal{N}(0,1)$ by Equation \ref{eq:first-layer}. Therefore, the covariance $\Sigma_{l-1}$ at layer $l-1$ is just:
    \begin{equation*}
        \Sigma_{l-1}(x,x^\prime)=
          \begin{bmatrix}
            1 & \rho_{l-1}(x,x^\prime)  \\
            \rho_{l-1}(x,x^\prime) & 1
          \end{bmatrix},
    \end{equation*}
    Equation \ref{eq:covar} may now be used to express $\rho_l(x,x^\prime)$ in terms of $\rho_{l-1}(x,x^\prime)$. Dropping the $(x,x^\prime)$ indexing for brevity:
    \begin{align*}
        \rho_l &= \Expect_{u,v\sim \mathcal{N}\left(0,\Sigma_{l-1}\right)} \left[ \phi \left(u\right) \phi \left(v\right) \right] \\
        &= \frac{1}{\pi \sqrt{1-\rho_{l-1}^2}} \iint_{u,v\geq0}\idiff{u}\idiff{v} \exp\left[-\frac{u^2 - 2 \rho_{l-1} uv + v^2}{2(1-\rho_{l-1}^2)}\right]uv.
    \end{align*}
    By making the substitution $\rho_{l-1}=\cos\theta$, this integral becomes equivalent to $\frac{1}{\pi}J_1(\theta)$ as expressed in \citet[Equation 15]{choandsaul}. Substituting in the evaluation of this integral \citep[Equation 6]{choandsaul}, we obtain:
    \begin{equation}\label{eq:recur}
      \rho_{l}(x,x^\prime) = h(\rho_{l-1}(x,x^\prime)),
    \end{equation}
    where $h(t):= \tfrac{1}{\pi}\left[\sqrt{1-t^2} + t\cdot(\pi- \arccos t)\right]$. 
    
    All that remains is to evaluate $\rho_1(x,x^\prime)$. Since $\Expect\left[W^{(1)}_{ij}W^{(1)}_{ik}\right]=\delta_{jk}$, this is given by:
    \begin{align*}
        \rho_1(x,x^\prime) &:= \Expect \left[ z^{(1)}_i(x) z^{(1)}_i(x^\prime)\right] \\
        &= \frac{1}{d_0} \sum_{j,k=1}^{d_0} \Expect\left[W^{(1)}_{ij}W^{(1)}_{ik}\right] x_j x_k^\prime = \frac{x^Tx^\prime}{d_0}.
    \end{align*}
    The proof is completed by combining this expression for $\rho_1(x,x^\prime)$ with the recurrence relation in Equation \ref{eq:recur}.
\end{proof}

The following lemma is a main technical contribution of this paper.

\orthant*
\begin{proof} The orthant probability may first be expressed using the probability density function of the multivariate Normal distribution as follows:
    \begin{align*}
        p &= \frac{1}{\sqrt{(2\pi)^n\det\Sigma}}\int_{c\odot z\geq0}\econst^{-\half z^T \Sigma^{-1}z}\idiff{z}.
    \end{align*}
    By the change of variables $u = \frac{c\odot z}{\sqrt[2n]{\det\Sigma}}$ or equivalently $z = \sqrt[2n]{\det\Sigma}(c\odot u)$, the orthant probability may be expressed as:
    \begin{align*}
        p &= \frac{1}{\sqrt{(2\pi)^n}}\int_{u\geq0}\econst^{-\half (c\odot u)^T \sqrt[n]{\det \Sigma}\Sigma^{-1}(c\odot u)}\idiff{u} \\
        &= \frac{1}{2^n}\frac{1}{\sqrt{(2\pi)^n}}\int_{\R^n}\econst^{-\half (c\odot \abs{u})^T \sqrt[n]{\det \Sigma}\Sigma^{-1}(c\odot \abs{u})}\idiff{u}\\
        &= \frac{1}{2^n}\Expect_{u\sim\mathcal{N}(0,\mathbb{I})}\left[\econst^{-\half (c\odot \abs{u})^T\left( \sqrt[n]{\det \Sigma}\Sigma^{-1}-\mathbb{I}\right)(c\odot \abs{u})}\right],
    \end{align*}
    where the second equality follows by symmetry, and the third equality follows by inserting a factor of $\econst^{-u^2/2} \econst^{+u^2/2}=1$ into the integrand.
    
    Next, by Jensen's inequality,
    \begin{align*}
    p &\geq \frac{1}{2^n}\econst^{-\half \Expect_{u\sim\mathcal{N}(0,\mathbb{I})}\left[(c\odot \abs{u})^T\left( \sqrt[n]{\det \Sigma}\Sigma^{-1}-\mathbb{I}\right)(c\odot \abs{u})\right]} \\
    &= \frac{1}{2^n}\econst^{-\half \sum_{ij} \Expect_{u\sim\mathcal{N}(0,\mathbb{I})}\left[c_ic_j\abs{u_i}\abs{u_j}\left( \sqrt[n]{\det \Sigma}\Sigma^{-1}_{ij}-\delta_{ij}\right)\right]} \\
    &= \frac{1}{2^n}\econst^{-\half \left[ \sum_i \left( \sqrt[n]{\det \Sigma}\Sigma^{-1}_{ii}-1\right) + \frac{2}{\pi}\sum_{i\neq j} c_ic_j\sqrt[n]{\det \Sigma}\Sigma^{-1}_{ij}\right]} \\
    &= \frac{1}{2^n}\econst^{-\half \left[ \sum_i \left( (1-\frac{2}{\pi})\sqrt[n]{\det \Sigma}\Sigma^{-1}_{ii}-1\right) + \frac{2}{\pi}\sum_{ij} c_ic_j\sqrt[n]{\det \Sigma}\Sigma^{-1}_{ij}\right]}\\
    &= \frac{1}{2^n}\econst^{-\half \left[ (1-\frac{2}{\pi})\sqrt[n]{\det \Sigma}\trace(\Sigma^{-1})-n + \frac{2}{\pi} \sqrt[n]{\det \Sigma}c^T\Sigma^{-1}c\right]} \\
    &= \frac{1}{2^n}\econst^{\frac{n}{2}-\sqrt[n]{\det \Sigma}\left[ (\frac{1}{2}-\frac{1}{\pi})\trace(\Sigma^{-1}) + \frac{1}{\pi} c^T\Sigma^{-1}c\right]}.
    \end{align*}
    The third equality follows by noting $\Expect_{u\sim\mathcal{N}(0,\mathbb{I})}\abs{u_i}\abs{u_i}=\Expect_{u\sim\mathcal{N}(0,1)}u^2=1$, while for $i\neq j$, $\Expect_{u\sim\mathcal{N}(0,\mathbb{I})}\abs{u_i}\abs{u_j}=[\Expect_{u\sim\mathcal{N}(0,1)}\abs{u}]^2=2/\pi$.
\end{proof}

\end{document}